\documentclass{article}

\usepackage{microtype}
\usepackage{graphicx}
\usepackage{subcaption}
\usepackage{booktabs} 

\usepackage[pagebackref,breaklinks,colorlinks=true]{hyperref}

\usepackage{amsmath,amsfonts,bm}

\def\eqref#1{(\ref{#1})}

\def\1{\bm{1}}

\def\rvc{{\mathbf{c}}}

\def\rvg{{\mathbf{g}}}

\def\rvm{{\mathbf{m}}}

\def\rvx{{\mathbf{x}}}

\def\rvz{{\mathbf{z}}}

\DeclareMathAlphabet{\mathsfit}{\encodingdefault}{\sfdefault}{m}{sl}
\SetMathAlphabet{\mathsfit}{bold}{\encodingdefault}{\sfdefault}{bx}{n}

\def\gG{{\mathcal{G}}}

\def\gM{{\mathcal{M}}}

\def\gS{{\mathcal{S}}}

\def\gV{{\mathcal{V}}}

\newcommand{\E}{\mathbb{E}}

\DeclareMathOperator*{\argmax}{arg\,max}

\usepackage[preprint]{icml2026}

\usepackage{amsmath}
\usepackage{amssymb}
\usepackage{mathtools}
\usepackage{amsthm}
\usepackage{xcolor}%
\usepackage{color}%
\usepackage{enumitem}
\usepackage{booktabs} 
\usepackage{multirow} 
\usepackage{tcolorbox}
\usepackage{pifont}
\usepackage{graphicx}   

\usepackage{makecell}

\usepackage[capitalize,noabbrev]{cleveref}

\theoremstyle{plain}
\newtheorem{theorem}{Theorem}[section]
\newtheorem{proposition}[theorem]{Proposition}

\theoremstyle{definition}

\theoremstyle{remark}

\newcommand{\Cat}{{\rm{Cat}}}

\definecolor{mylightorange}{rgb}{1.0, 0.53, 0.0}
\definecolor{mydarkblue}{rgb}{0, 0.08, 0.52}

\hypersetup{citecolor=mydarkblue,linkcolor=mylightorange}

\usepackage[textsize=tiny]{todonotes}

\icmltitlerunning{Self-Rewarding Sequential Monte Carlo for Masked Diffusion Language Models}

\begin{document}

\twocolumn[
  \icmltitle{Self-Rewarding Sequential Monte Carlo for \\ Masked Diffusion Language Models}



  \icmlsetsymbol{equal}{*}

  \begin{icmlauthorlist}
    \icmlauthor{Ziwei Luo}{uu}
    \icmlauthor{Ziqi Jin}{mm,ntu}
    \icmlauthor{Lei Wang}{mm}
    \icmlauthor{Lidong Bing}{mm}
    \icmlauthor{Thomas B. Sch{\"o}n}{uu}
  \end{icmlauthorlist}

  \icmlaffiliation{uu}{Uppsala University, Sweden}
  \icmlaffiliation{mm}{MiroMind AI, Singapore}
  \icmlaffiliation{ntu}{Nanyang Technological University, Singapore}

  \icmlcorrespondingauthor{Lei Wang}{lei.wang@miromind.ai}

  \icmlkeywords{Diffusion models, language models, masked diffusion language models, sequential Monte Carlo}

  \vskip 0.3in
]



\printAffiliationsAndNotice{}  

\begin{abstract}
  This work presents self-rewarding sequential Monte Carlo (SMC), an inference-time scaling algorithm enabling effective sampling of masked diffusion language models (MDLMs). Our algorithm stems from the observation that most existing MDLMs rely on a confidence-based sampling strategy, where only tokens with the highest prediction confidence are preserved at each step. This restricts the generation to a noise-sensitive, greedy decoding paradigm, resulting in an inevitable collapse in the diversity of possible paths. We address this problem by launching multiple interacting diffusion processes in parallel, referred to as \emph{particles}, for trajectory exploration. Importantly, we introduce the trajectory-level confidence as a self-rewarding signal for assigning particle importance weights. During sampling, particles are iteratively weighted and resampled to systematically steer generation towards globally confident, high-quality samples. Our self-rewarding SMC is verified on various masked diffusion language models and benchmarks, achieving significant improvement without extra training or reward guidance, while effectively converting parallel inference capacity into improved sampling quality. Our code is available at \url{https://github.com/Algolzw/self-rewarding-smc}.
\end{abstract}

\section{Introduction}

Generative modeling with diffusion models~\citep{sohl2015deep,ho2020denoising} has led to remarkable advances across a wide range of applications, including image generation~\citep{dhariwal2021diffusion,rombach2022high,peebles2023scalable}, text-to-image generation~\citep{saharia2022photorealistic,zhang2023adding,gu2022vector}, and video synthesis~\citep{ho2022video,blattmann2023stable}. More recently, diffusion models have further shown strong potential for discrete data generation, particularly for text~\citep{sahoo2024simple,arriola2025block,nie2025large,ye2025dream}, by modeling a forward masking process and iteratively predicting masked tokens at inference~\citep{lou2024discrete,chang2022maskgit}. Despite this progress, existing masked diffusion language models (MDLMs) adopt a greedy, confidence-based sampling strategy~\cite{sahoo2024simple}, in which only tokens with the highest prediction probability are preserved at each step, leading to myopic trajectory exploration and suboptimal generation performance.

Inference-time scaling methods have been proposed to improve MDLMs' sample diversity and quality without modifying pretrained models~\citep{dang2025inference,singhal2025a}. Typically, they leverage human preference guidance, such as promoting fluency, enforcing structured format, or controlling toxicity~\citep{dathathri2019plug,rafailov2023direct,loula2025syntactic}, to tilt the generation trajectories towards high-reward target distributions~\citep{dang2025inference,uehara2025inference}. While their results are impressive, such methods rely on external reward signals that are often task-specific and require hand-crafted tuning, which restricts their applicability to general MDLM-based generation.

In this work, we revisit the confidence-based sampling of masked diffusion and propose a \emph{self-rewarding} sequential Monte Carlo (SMC) algorithm for inference-time scaling on general tasks. Specifically, we maintain a set of interacting diffusion processes, referred to as \emph{particles}, to explore multiple trajectories in parallel. These particles still follow the low-confidence remasking strategy of MDLMs but will be resampled based on their trajectory-level confidence, which is serving as an implicit reward signal to assign importance weights to each particle. Conceptually, our algorithm runs multiple generation processes, periodically duplicating promising candidates and discarding less confident ones based on their accumulated likelihood. The resulting self-rewarding SMC enables principled trajectory exploration and steers the sampling process towards stable, globally confident, and high-quality outputs, without extra reward models or task-specific design as guidance.

Our method is verified on various masked diffusion language models including MDLM~\cite{sahoo2024simple} and BD3-LMs~\cite{arriola2025block}, and diffusion large language models (dLLMs) including LLaDA 1.5~\cite{zhu2025llada} and Dream~\cite{ye2025dream}. The results show that the proposed self-rewarding SMC consistently improves the baseline models across multiple benchmarks. 

In summary, we make the follow contributions:
\begin{enumerate}
    \item We propose a general, self-rewarding sequential Monte Carlo algorithm for masked diffusion language models. The proposed method improves sampling without extra training or reward signals.
    \item We unify the sampling and remasking strategy of MDLMs from the probabilistic perspective, and theoretically show that the trajectory-level confidence is naturally a self-rewarding signal for SMC.
    \item Extensive experiments and analysis demonstrate our self-rewarding SMC improves sample quality on different pretrained models and benchmarks.
\end{enumerate}

\section{Background}
\label{sec:background}

\paragraph{Notation} We consider variables $\rvx = \{ x_1, \dots, x_L \}$ as a sequence of $L$ tokens, where each token $x_\ell = \rvx(\ell)$ is a `one-hot' column vector with $V$ categories in the space $\gV \triangleq \{ x \in \{ 0, 1 \}^V : \sum_{i=1}^V x^{(i)} = 1 \} \subset \Delta^V$ for the simplex $\Delta^V$. We let the $V^{\text{th}}$ category denote a special $\mathtt{[MASK]}$ token, where $\rvm \in \gV$ is its one-hot vector. Moreover, we define $\Cat( \cdot ; p)$ as a categorical distribution with probability $p \in \Delta^V$.

\subsection{Masked Diffusion Models}
\label{subsec:mdm}

Given prior distribution $\Cat(\cdot; \rvm)$, masked diffusion models (MDMs)~\citep{austin2021structured,arriola2025block,chang2022maskgit,sahoo2024simple} are characterized by parametric models $p_\theta$ trained to reverse a forward masking process for new data sampling from a full masked sequence. For finite-time process, we let $T$ be the number of diffusion steps and $t(i)=\frac{i}{T} \in [0,1]$ be the continuous time. The marginal distribution of $\rvx_{t(i)}$ conditioned on target data $\rvx_0$ is as follows~\citep{sahoo2024simple}:
\begin{equation}
    p(\rvx_t \mid \rvx_0) = \Cat(\rvx_t; \, \alpha_t \rvx_0 + (1 - \alpha_t)\rvm),
    \label{eq:mdm_forward}
\end{equation}
where $\alpha_t$ denotes a monotonically decreasing schedule satisfying $\alpha_0 \approx 1$ and $\alpha_1 \approx 0$, such that $\rvx_1 \sim \Cat(\cdot; \rvm)$. Let $s(i)=\frac{i-1}{T}$ be the time step directly preceding $t(i)$, the time-reversal conditional posterior for all $\mathtt{[MASK]}$ tokens i.e., $\rvx_t = \rvm$, can be obtained by
\begin{equation}
    p(\rvx_s \mid \rvx_t, \rvx_0) = \Cat(\rvx_s; \, \frac{\alpha_s - \alpha_t}{1 - \alpha_t} \rvx_0 + \frac{1 - \alpha_s}{1 - \alpha_t} \rvx_t).
    \label{eq:mdm_backward}
\end{equation}
Notably, unmasked tokens i.e., $\rvx_t \neq \rvm$, remain unchanged in the reverse process. Since $\rvx_0$ is not available during inference, the reverse unmasking process is parametrized as $p_\theta(\rvx_s \mid \rvx_t, \hat{\rvx}_0)$, where $\hat{\rvx}_0 \sim \Cat(\rvx_0;p_\theta(\rvx_t))$ is sampled from the model predictive distribution. To simplify the notation, we only consider sampling from the diffusion reverse process and denote $t \coloneqq t(i)$ as the discrete time step throughout the work.

\subsection{Importance Sampling and Sequential Monte Carlo}
\label{subsec:is_smc}

Assume we want to estimate expectations under trajectory target distribution $\pi(\rvx_{t:T})$, such as $\E_\pi[f(\rvx_{t:T})]$, where $f(\cdot)$ is a test function. Sampling from $\pi$ is generally intractable.

\textbf{Importance sampling (IS)}~\cite{robert1999monte} introduces a proposal distribution $q(\rvx_{t:T})$ that allows the expectation to be rewritten as
\begin{equation}
        \E_\pi[f(\rvx_{t:T})] = \E_q[ \frac{\tilde{\pi}(\rvx_{t:T})}{q(\rvx_{t:T})} f(\rvx_{t:T})] \approx \frac{1}{N} \sum_{i=1}^N w_t^i f(\rvx_{t:T}^i),
        \label{eq:is_approx}
\end{equation}
where $\rvx_{t:T}^i \sim q(\rvx_{t:T})$ and $\tilde{\pi}$ is the unnormalized density. Moreover, $w_t^i = {\tilde{w}_t^i}/{\sum_{j=1}^N \tilde{w}_t^j}$ is the normalized importance weight, where $\tilde{w}_t^i = {\tilde{\pi}(\rvx_{t:T}^i)}/{q(\rvx_{t:T}^i)}$ gauges the discrepancy between the target distribution and the proposal distribution. While conceptually simple, importance sampling often suffers from unfavorably high variance. 

\textbf{Sequential Monte Carlo (SMC)}~\cite{del2006sequential} improves upon IS by introducing 
a sequence of intermediate unnormalized path measures $\tilde{\pi}_t(\rvx_{t:T})$, whose terminal distribution coincides with the desired trajectory target distribution. SMC incorporates resampling and sequential weighting techniques across the trajectory, thereby reducing variance in practice. We begain by defining the incremental importance weights as
\begin{equation}
    \tilde{w}_{t-1}(\rvx_{t-1:T}) = \frac{\tilde{\pi}_{t-1}(\rvx_{t-1:T})}{\tilde{\pi}_t(\rvx_{t:T}) \, q_{t-1}(\rvx_{t-1} \mid \rvx_t)},
    \label{eq:smc_iiweights}
\end{equation}
where $q_{t-1}(\rvx_{t-1} \mid \rvx_t)$ is a Markovian sequential proposal operating in reverse time (see Appendix~\ref{app-subsec:smc_iiweight} for more details). During sampling, we initialize a set of $N$ particles $\rvx_T^i \sim q_T(\rvx_T)$, each representing a trajectory distribution, with weights $w_T^i = \tilde{\pi}_T(\rvx_T^i) / q_T(\rvx_T^i)$. At each iteration e.g., from $t$ to $t-1$, SMC takes the follows three steps: 
\begin{enumerate}[
  topsep=2pt,
  itemsep=1pt,
  parsep=0pt,
  partopsep=0pt
]
    \item[i)] \textit{Resample}: resample ancestor $\{ \rvx_t^i \}_{i=1}^N$ according to the weights $\{ w_t^i \}_{i=1}^N$;
    \item[ii)] \textit{Propagate}: sample new particles from proposal distribution $\rvx_{t-1}^i \sim q_{t-1}(\rvx_{t-1} \mid \rvx_t^i)$;
    \item[iii)] \textit{Re-weight}: compute and accumulate the incremental weights in Eq.~\eqref{eq:smc_iiweights}, and normalize $w_{t-1}^i = \frac{\tilde{w}_{t-1}^i}{\sum_{j=1}^N \tilde{w}_{t-1}^j}$.
\end{enumerate}
The resulting collection of weighted particles provides an asymptotically consistent approximation of the trajectory target distribution.

\begin{figure*}[ht]
    \centering
    \includegraphics[width=1.\linewidth]{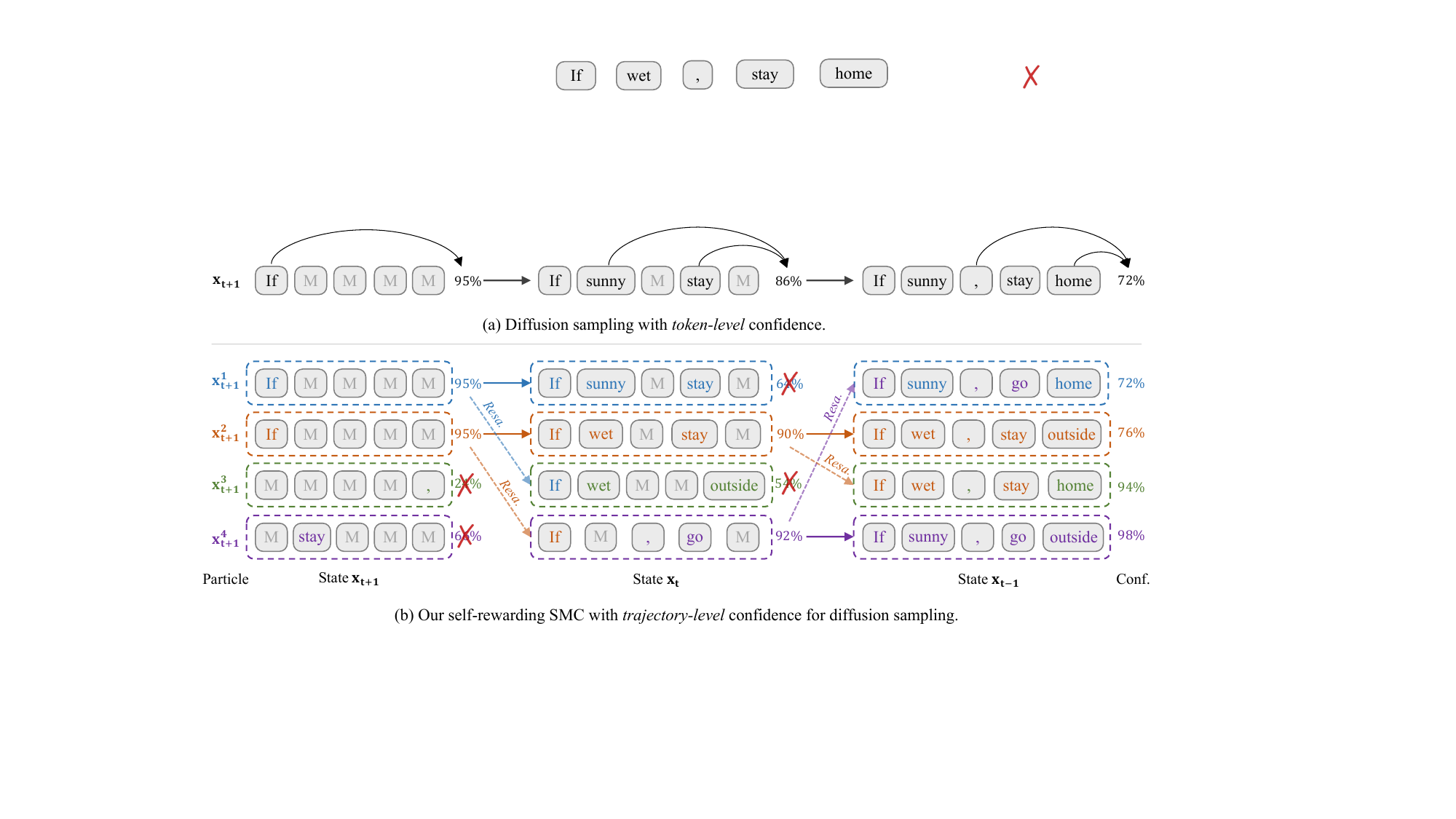}
    \caption{Illustrative example of text generation using (a) masked diffusion models and (b) our self-rewarding SMC framework. Here, `M' represents  $\mathtt{[MASK]}$ tokens and `\textit{Resa.}' denotes resampling. SMC maintains multiple diffusion processes, called \emph{particles}, to explore the sampling trajectories in parallel. At each iteration, we take three steps: \emph{resample}, \emph{propagate}, and \emph{re-weight}, to perform as an interactive optimization process. Importantly, traditional diffusion sampling only considers token-level confidence, while our algorithm uses the trajectory-level confidence as importance weights, calculated using Eq.~\eqref{eq:conf_iiweights}, to select globally confident outputs.}
    \label{fig:framework}
\end{figure*}

\section{Self-Rewarding Sequential Monte Carlo}
\label{sec:method}

\subsection{Reformulate the Sampling of MDMs}

We consider sampling from a pretrained masked diffusion model $p_\theta(\rvx_t)$. Given a mask set $\gM_t \triangleq \{\ell: \rvx_t(\ell)=\mathtt{[MASK]}\}$ at time $t$, recall that the learned posterior $p_\theta(\rvx_{t-1} \mid \rvx_t, \hat{\rvx}_0)$ in Eq.~\eqref{eq:mdm_backward} is only applied for mask tokens i.e.,
\begin{equation}
    \rvx_{t-1}(j) \sim p_\theta(\rvx_{t-1} (j) \mid \rvx_t, \hat{\rvx}_0), \quad j \in \gM_t,
\end{equation}
where $\hat{\rvx}_0 \sim \Cat(\rvx_0;p_\theta(\rvx_t))$ is sampled from the model predictive distribution. For each token $\rvx_{t-1}(j)$, we directly define its confidence as the model probability on $j$, as
\begin{equation}
    \rvc_t (j) \coloneqq p_\theta\big(\hat{\rvx}_0(j) \mid \rvx_t \big), \quad j \in \gM_t.
    \label{eq:mdm_confidence}
\end{equation}
At each iteration, MDMs update a subset $\gS_t \subseteq \gM_t$ following a predefined policy, such as the \emph{low-confidence remasking} strategy~\cite{chang2022maskgit,nie2025large}, to ensure an iterative unmasking process from $\rvx_T$ to $\rvx_0$. 

\textbf{Low-confidence remasking} has been widely used in diffusion large language models as an efficient strategy for sequence generation. Typically, by defining a schedule $\rho_t$ to specify the number of tokens to be unmasked at step $t$, we introduce the following policy:
\begin{equation}
    \gS_t^{\text{top-k}} = \{\, j\in \gM_t:\; \text{Top-}\rho_t \{ \rvc_t{(j)} \} \,\},
    \label{eq:lcr_topk}
\end{equation}
which indicates only the highest probability tokens are preserved at each step. Note that the schedule $\rho_t$ can be a scalar function of $t$, or be more flexible by explicitly defining $\rho_t$ as a confidence threshold~\cite{wu2025fast}, such that
\begin{equation}
    \gS_t^{\text{thr}} = \{\, j\in \gM_t:\; \rvc_t{(j)} \ge \rho_t \,\},
    \label{eq:lcr_threshold}
\end{equation}
which enables faster sampling while preserving performance when the model is confident in its predictions.

In summary, the reverse transition distribution of each token $\rvx_t (j)$ can be formulated by
\begin{equation}
    p_\theta(\rvx_{t-1} (j) \mid \rvx_t) =
    \begin{cases}
        p_\theta(\rvx_{t-1} (j) \mid \rvx_t, \hat{\rvx}_0), & j\in \gS_t,\\
        \Cat(\rvx_{t-1} (j); \rvm), & j\in \gM_t\setminus \gS_t,\\
        \Cat(\rvx_{t-1} (j); \rvx_t), & j\notin \gM_t,
    \end{cases}
    \label{eq:mdm_token_transition}
\end{equation}
where $\Cat(\rvx_{t-1} (j); \rvx_t)$ is like a Dirac delta distribution concentrated at $\rvx_t (j)$. Accordingly, the reverse transition kernel over the full sequence $\rvx_t$ is
\begin{equation}
    K_t(\rvx_t, \rvx_{t-1}) = \prod_{j=1}^L p_\theta(\rvx_{t-1} (j) \mid \rvx_t).
    \label{eq:mdm_transition_kernel}
\end{equation}
This transition kernel deterministically preserves unmasked tokens, remasks low-confidence tokens, and samples newly accepted tokens according to the model prediction.

One problem of sampling from Eq.~\eqref{eq:mdm_token_transition} is that only step-wise confidence i.e., $\rvc_t$ in Eq.~\eqref{eq:mdm_confidence}, is utilized for remasking. This often bias generation towards locally optimal tokens, inducing noise-sensitive, myopic exploration of the sequence trajectory, as illustrated in Figure~\ref{fig:framework}.

\subsection{Confidence-based Sequential Monte Carlo}

Assume that we have~$N$ diffusion sampling processes, called particles, to generate $N$ sequences in parallel. To tilt sampling towards globally confident sequence generation, we define a Feynman--Kac model~\cite{del2004feynman} with potential
\begin{equation}
    G_{t-1}(\rvx_t,\rvx_{t-1}) = \prod_{j\in S_t} p_\theta(\rvx_{t-1}(j) \mid \rvx_t),
    \label{eq:potential}
\end{equation}
which is the joint probability of accepted tokens within the set~$\gS_t$. Intuitively, the potential denotes how confident the model is in the tokens at step~$t$, performing as a self-rewarding signal for SMC update. In addition, we define the intermediate unnormalized path measures $\tilde{\pi}_t(\rvx_{t:T})$ to satisfy the following recursion:
\begin{equation}
    \tilde{\pi}_{t-1}(\rvx_{t-1:T}) = \tilde{\pi}_t(\rvx_{t:T})\, K_t(\rvx_t,\rvx_{t-1})\, G_{t-1}(\rvx_t,\rvx_{t-1}),
    \label{eq:fk_recursion}
\end{equation}
where $K_t(\rvx_t,\rvx_{t-1})$ is the reverse diffusion transition kernel in Eq.~\eqref{eq:mdm_transition_kernel}. Recall that SMC defines an incremental importance weight for each particle (see Eq.~\eqref{eq:smc_iiweights}). By letting its proposal equal the transition kernel in Eq.~\eqref{eq:fk_recursion}, we obtain the following result:

\begin{proposition}
\label{prop:confidence_weights}
Given a pretrained diffusion model $p_\theta$, let $\{\tilde\pi_t(\rvx_{t:T})\}_{t=0}^T$ denote the unnormalized path measures defined by the recursion in Eq.~\eqref{eq:fk_recursion}. If the sequential proposal in SMC is chosen to be the diffusion transition kernel, i.e., $q_{t-1}(\rvx_{t-1} \mid \rvx_t) = K_t(\rvx_t,\rvx_{t-1})$, then the incremental importance weights at step $t-1$ is given by
\begin{equation}
    \tilde{w}_{t-1}(\rvx_{t-1:T}) = \prod_{j\in S_t} \rvc_t(j),
    \label{eq:conf_iiweights}
\end{equation}
where $\rvc_t (j) \coloneqq p_\theta\big(\hat{\rvx}_0(j) \mid \rvx_t \big)$ is the token confidence and $\gS_t$ denotes the selected mask subset to be updated at step $t$.
\end{proposition}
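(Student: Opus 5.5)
The plan is to substitute the Feynman--Kac recursion in Eq.~\eqref{eq:fk_recursion} directly into the definition of the incremental weight in Eq.~\eqref{eq:smc_iiweights}. This gives
\begin{equation*}
\tilde w_{t-1}(\rvx_{t-1:T}) = \frac{\tilde\pi_t(\rvx_{t:T})\,K_t(\rvx_t,\rvx_{t-1})\,G_{t-1}(\rvx_t,\rvx_{t-1})}{\tilde\pi_t(\rvx_{t:T})\,K_t(\rvx_t,\rvx_{t-1})} = G_{t-1}(\rvx_t,\rvx_{t-1}),
\end{equation*}
using $q_{t-1}(\rvx_{t-1}\mid\rvx_t)=K_t(\rvx_t,\rvx_{t-1})$. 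The cancellation is legitimate on the support of the proposal, that is, wherever $K_t(\rvx_t,\rvx_{t-1})>0$. Only such $\rvx_{t-1}$ are ever sampled by the propagate step, so restricting to this support is harmless. We also need $\tilde\pi_t(\rvx_{t:T})>0$ along any sampled trajectory. This follows by induction from $\tilde\pi_T>0$ on the support of $q_T$, because each factor $K_t$ and $G_{t-1}$ is positive on sampled transitions.

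The remaining work is to identify $G_{t-1}=\prod_{j\in\gS_t}p_\theta(\rvx_{t-1}(j)\mid\rvx_t)$ with $\prod_{j\in\gS_t}\rvc_t(j)$. By Eq.~\eqref{eq:mdm_token_transition}, each $j\in\gS_t$ is updated by the posterior $p_\theta(\rvx_{t-1}(j)\mid\rvx_t,\hat\rvx_0)$. Since $j$ is in the accepted set, it is unmasked at step $t-1$, and by the posterior in Eq.~\eqref{eq:mdm_backward} the unmasked value is $\hat\rvx_0(j)$. (Under the low-confidence remasking convention, the unmasking probability of selected positions is effectively one, and the stochastic masking weight is absorbed into the choice of $\gS_t$.) The realized token $\rvx_{t-1}(j)=\hat\rvx_0(j)$ was drawn from $\Cat(\cdot;p_\theta(\rvx_t))$ at position $j$. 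Its model probability is therefore $p_\theta(\hat\rvx_0(j)\mid\rvx_t)=\rvc_t(j)$ by Eq.~\eqref{eq:mdm_confidence}. Taking the product over $j\in\gS_t$ yields Eq.~\eqref{eq:conf_iiweights}.

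The main obstacle is this identification step, because the notation $p_\theta(\rvx_{t-1}(j)\mid\rvx_t)$ is ambiguous. Marginalizing over $\hat\rvx_0$, it could mean the categorical posterior that includes the factor $\frac{\alpha_s-\alpha_t}{1-\alpha_t}$. Alternatively, it could mean the predictive probability of the chosen token. I would resolve this explicitly by adopting the paper's convention that, for $j\in\gS_t$, the transition deterministically commits to $\hat\rvx_0(j)$ and the token is scored by the denoiser's predictive probability. This convention makes $G_{t-1}$ exactly the joint confidence of the accepted tokens. I would also note that if the schedule factor were retained, it would be a constant common to all particles with the same $|\gS_t|$, so it would cancel after normalization of the weights anyway.

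Finally, I would remark that positions $j\notin\gS_t$ contribute nothing to $G_{t-1}$, since they are excluded from the product in Eq.~\eqref{eq:potential}. Their factors in $K_t$ are deterministic (either remasked or kept) and are already cancelled by the proposal. This confirms that the weight depends only on the selected set $\gS_t$.
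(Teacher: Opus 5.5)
Your proposal is correct and follows the same route as the paper: substitute the recursion \eqref{eq:fk_recursion} into \eqref{eq:smc_iiweights}, cancel $\tilde\pi_t$ and $K_t$ against the proposal to obtain $G_{t-1}$, and then identify each factor of $G_{t-1}$ with the confidence $\rvc_t(j)$. Your support and positivity remarks, and your explicit choice of convention for $p_\theta(\rvx_{t-1}(j)\mid\rvx_t)$ when $j\in\gS_t$, are more careful than the paper. That convention commits to $\hat{\rvx}_0(j)$ and scores it by its predictive probability, instead of using the posterior of Eq.~\eqref{eq:mdm_backward} with its $\frac{\alpha_s-\alpha_t}{1-\alpha_t}$ factor; the paper simply asserts $p_\theta(\rvx_{t-1}(j)\mid\rvx_t)=\rvc_t(j)$ by recalling the definitions.
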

The proof is provided in Appendix~\ref{app-subsec:proof_prop31}. Under our SMC framework, Eq.~\eqref{eq:conf_iiweights} defines a trajectory-level confidence-based weight, since it accumulates confidence scores across sampling steps until all tokens are unmasked.

Moreover, we note that this choice of proposal corresponds to a bootstrap SMC scheme~\cite{doucet2001introduction}, further showing that reweighting particles by trajectory-level confidence is not a heuristic choice but follows naturally from the underlying diffusion-based formulation.

\subsection{Practical Sampling with Self-Rewarding SMC}

We now describe how Eq.~\eqref{eq:conf_iiweights} is incorporated into the sampling procedure of masked diffusion models via sequential Monte Carlo. As shown in Figure~\ref{fig:framework} and Algorithm~\ref{alg:srsmc}, we begin by initializing $N$ particles that are fully masked sequences, with uniform weights. At each step, we perform \emph{resample}, \emph{propagate}, and \emph{re-weight} as a standard SMC described in Sec.~\ref{subsec:is_smc}. Specifically, the diffusion sampling with local confidence-based remasking is performed during propagation, denoted by the transition kernel in Eq.~\eqref{eq:mdm_token_transition} and Eq.~\eqref{eq:mdm_transition_kernel}. Then each particle is reweighted according to Eq.~\eqref{eq:conf_iiweights}, which forms a trajectory-level confidence score for resampling. The final output is selected from the resulting particle set with the maximum weight. In addition, we use effective sample size (ESS)~\cite{zheng2024masked} and Gumbel-Max trick~\cite{zheng2024masked} to further improve the sample efficiency.

\paragraph{Adaptive resampling.} In practice, resampling at every diffusion step might be unnecessary in particular when the variance of weights $w_t$ is low. We therefore adopt an adaptive resampling strategy based on the effective sample size, which is defined as 
\begin{equation}
    \mathrm{ESS} = \frac{1}{\sum_{i=1}^N (w_t^i)^2}
\end{equation}
We follow a common practical setting~\cite{doucet2001introduction} to let resample be triggered only when $\mathrm{ESS}$ falls below $N/2$, which indicates significant weight degeneracy.

\paragraph{Gumbel-Max sampling.} Following~\citet{zheng2024masked}, we employ the Gumbel-Max trick to sample discrete tokens from a controlled categorical
distribution for masked diffusion models. Particularly, given logits $\rvz_\ell$ over the vocabulary, token sampling is performed as
\begin{equation}
    \rvx = \argmax_\ell \big( {\rvz_\ell}/{\tau} + \rvg_\ell \big), \quad \rvg_\ell \sim \gG(0,1),
    \label{eq:gumbel_max}
\end{equation}
where $\tau$ is a temperature and $\gG$ denotes the Gumbel distribution. This Gumbel-Max trick approximates sampling from $\Cat(\cdot;\mathrm{softmax}(\rvz))$. Note that $\tau = 0$ means $\mathtt{argmax}$ and $\tau = 1$ recovers the standard categorical sampling.

\begin{algorithm}[t]
  \caption{Self-Rewarding SMC (SR-SMC)} 
  \label{alg:srsmc}
  \small
  \begin{algorithmic}[1]
    \vspace{.01in}
    \REQUIRE Pretrained diffusion model $p_\theta$, sampling steps $T$, number of particles $N$, remasking policy $\textsc{Select}(\cdot)$.
    \ENSURE Generated sequence $\hat{\rvx}_0$.
    
    \STATE \textbf{Initialize} $N$ particles i.e., sequences $\{ \rvx_T^i \}_{i=1}^N$ with all tokens set to $\mathtt{[MASK]}$, and weights $w_T^i = 1/N$ for all $i$.
    \FOR{$t=T, \dotsc, 1$}
      \STATE \textbf{Resample} $\{ \rvx_t^i \}_{i=1}^N$ according to weights $\{ w_t^i \}_{i=1}^N$.
      \STATE \textbf{Propagate} with mask set $\gM_t^i$ for all $i$: 
      \STATE \quad i) \; sample $\rvx_0^i \sim p_\theta(\rvx_t^i)$ and compute confidence $\rvc_t^i$.
      \STATE \quad ii) \ select update set $\gS_t^i \leftarrow \textsc{Select}(\rvc_t^i,\gM_t^i)$.
      \STATE \quad iii) sample $\rvx_{t-1}^i \sim K_t(\rvx_t^i,\rvx_{t-1}^i)$ using Eq.~\eqref{eq:mdm_token_transition}.
      \STATE \textbf{Re-weight} by computing $\tilde{w}_{t-1}^i$ using Eq.~\eqref{eq:conf_iiweights}.
    \ENDFOR
    \STATE \textbf{return} $\hat{\rvx}_0 \leftarrow \rvx_0^{i^\star}$ where $i^\star = \argmax_i \tilde{w}_0^i$.
    \vspace{.0in}
  \end{algorithmic}
\end{algorithm}

\section{Experiment}

Our self-rewarding SMC is evaluated across multiple benchmarks to demonstrate its ability to improve sampling of pretrained diffusion language models.

\subsection{Experimental Setup}

\paragraph{Pretrained models} We investigate two kinds of pretrained models: 1) masked diffusion language models, including MDLM~\cite{sahoo2024simple} and BD3-LMs~\cite{arriola2025block} pretrained on the OpenWebText (OWT) dataset ~\cite{Gokaslan2019OpenWeb} for sample quality evaluation; and 2) diffusion large language models including LLaDA~1.5~\cite{zhu2025llada} and Dream-7b~\cite{ye2025dream}. All of them are pretrained and finetuned on $>$2T tokens for general task evaluations. Both settings adopt a semi-autoregressive (Semi-AR) generation structure for higher-quality sequence generation. In addition, all models except MDLM employ a block-wise generation policy for a more efficient sampling.

\paragraph{Implementation} For our self-rewarding SMC, the adaptive sample strategy is used in all experiments. More specifically we set the resample frequency to 128 for MDLM and BD3-LMs, and to per-block for all dLLMs. The default number of particles is set to 4. Moreover, we use temperature $\tau=1$ and identical decoding settings for all comparisons. For dLLMs experiments, we  follow~\citet{wu2025fast} to enable KV cache and parallel decoding (i.e., with a threshold-based policy, see Eq.~\eqref{eq:lcr_threshold}) for inference acceleration. We test MDLM and BD3-LMs on a single NVIDIA H200 GPU and evaluate all dLLMs experiments on 8 NVIDIA A800 GPUs with a single batch size.

\begin{table}[t]
    \small
  \caption{Generative perplexity (Gen. PPL; $\downarrow$) and the number of function evaluations (NFEs; $\downarrow$) of 300 samples of lengths $L=1024, 2048$. All models are trained on the OWT dataset. For BD3-LMs~\cite{arriola2025block} and its SR-SMC implementation, we also report results with different block sizes $L'$. We set the resample frequency to 128 for all our SR-SMC variants.}
  \label{tab:gen_ppl_2048}
  \centering
  \resizebox{.99\linewidth}{!}{
    \begin{tabular}{lcccc}
      \toprule
      & \multicolumn{2}{c}{$L=1024$} & \multicolumn{2}{c}{$L=2048$} \\
      \cmidrule(lr){2-3} \cmidrule(lr){4-5}
      Model & Gen. PPL($\downarrow$) & NFEs & Gen. PPL($\downarrow$) & NFEs \\
      \midrule
      Autoregressive & 14.1 & 1K & 13.2 & 2K \\
      \midrule 
      \multicolumn{5}{l}{\textbf{Diffusion}} \\
      SEDD & 52.0 & 1K & -- & -- \\
      MDLM & 46.8 & 1K & 41.3 & 2K \\
      MDLM \textit{w/ SR-SMC} & 25.8 & 4K & 25.9 & 8K \\
      \midrule
      \multicolumn{5}{l}{\textbf{Block Diffusion}} \\
      SSD-LM \\ 
      \hspace{4.1em} $L'=25$ & 37.2 & 40K & 35.3 & 80K  \\
      \hspace{4.1em} $L'=25$ & 281.3 & 1K & 281.9 & 2K  \\
      BD3-LMs \\ 
      \hspace{4.1em} $L'=16$ & 33.4 & 1K & 31.5 & 2K \\
      \hspace{4.1em} $L'=8$ & 30.4 & 1K & 28.2 & 2K \\
      \hspace{4.1em} $L'=4$ & 25.7 & 1K & 23.6 & 2K \vspace{0.05in} \\ 
      
      BD3-LMs \textit{w/ SR-SMC}  \\
      \hspace{4.1em} $L'=16$ & 21.1 & 4K & 20.2 & 8K \\
      \hspace{4.1em} $L'=8$ & 18.9 & 4K & 17.3 & 8K \\
      \hspace{4.1em} $L'=4$ & \textbf{16.1} & 4K & \textbf{15.1} & 8K \\
      \bottomrule
    \end{tabular}}
\end{table}

\subsection{Sample Quality Evaluation}

We first investigate our self-rewarding SMC on pretrained masked diffusion language models (MDLM~\cite{sahoo2024simple} and BD3-LMs~\cite{arriola2025block}) for text sample quality evaluation. The other baselines include Autoregressive (AR), SEDD~\cite{lou2024discrete}, and SSD-LM~\cite{han2023ssd}. In Table~\ref{tab:gen_ppl_2048} and Figure~\ref{fig:gen_ppl}, we generate sequences of lengths $L=1024,2048$ and measure their generative perplexity under GPT2-Large. The results show that by scaling the inference compute with our SR-SMC, we significantly improve the sample quality for both diffusion and block diffusion baselines. The block diffusion variant of BD3-LMs with size $L'=4,8$ achieves generative perplexity below $20$, substantially narrowing the performance gap between diffusion-based models and autoregressive baselines. To further assess the impact on sample diversity, we also report the corresponding entropy results of the generated texts in the Appendix (Table~\ref{app-tab:gen_ppl_entropy}). These results indicate that SR-SMC improves text quality while maintaining high output diversity, rather than collapsing generation toward low-entropy or overly greedy solutions.

\begin{figure}
    \centering
    \includegraphics[width=1.\linewidth]{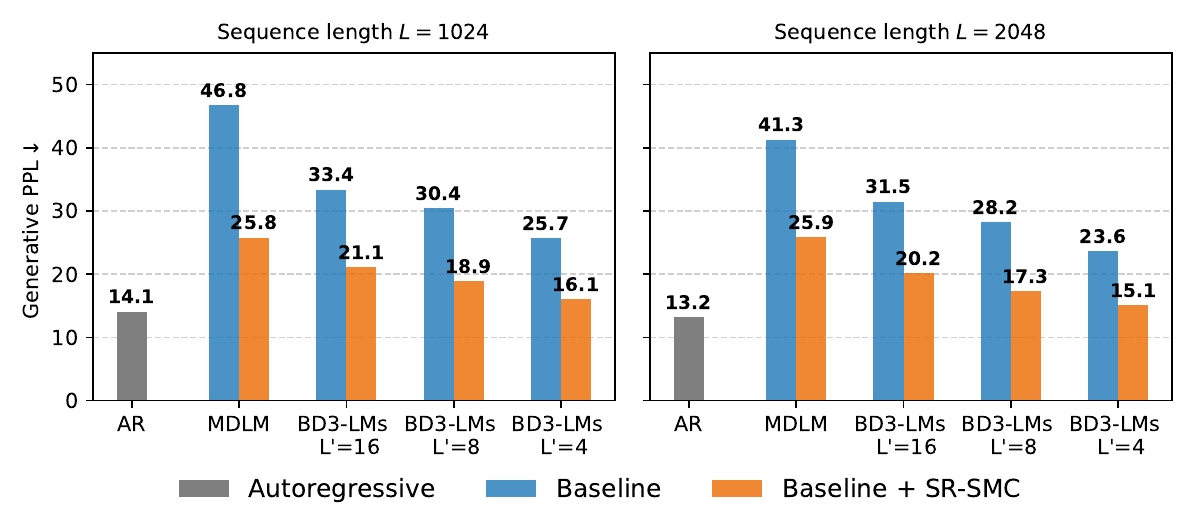}
    \caption{Generative perplexity ($\downarrow$) comparison of our self-rewarding SMC and the corresponding baselines.}
    \label{fig:gen_ppl}
\end{figure}

\begin{figure*}
    \centering
    \includegraphics[width=1.\linewidth]{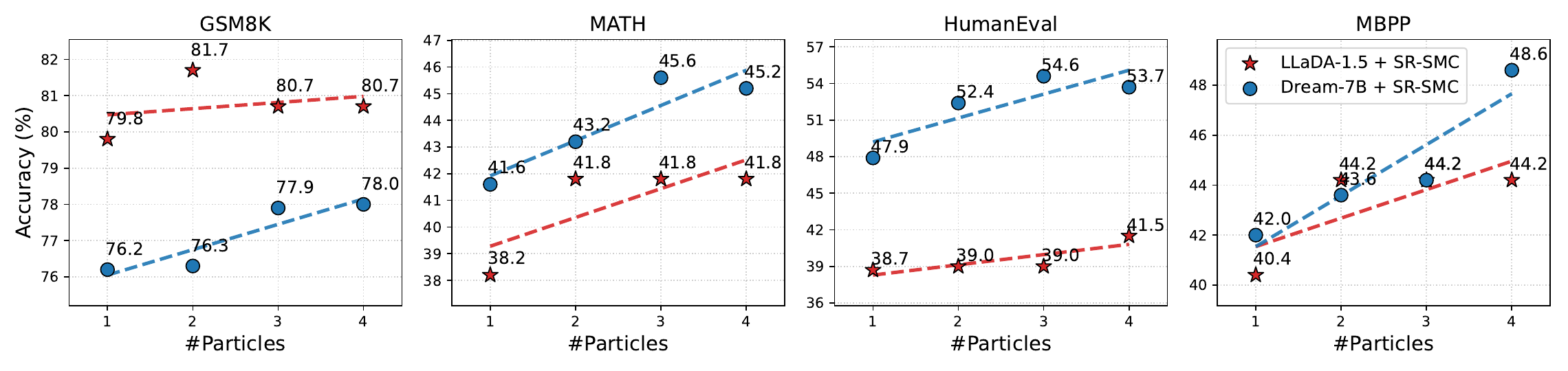}
    \caption{Comparison results of LLaDA 1.5 and Dream-7B using different numbers of particles on four tasks. Each marker denotes the empirical result while the dashed curves indicate first-order polynomial fits used solely to illustrating overall trends as $N$ increases.}
    \label{fig:n_particles}
\end{figure*}

\subsection{Results on Diffusion Large Language Models}

\begin{table}[t]
    \centering
    \caption{Performance of Self-Rewarding SMC (SR-SMC) on Diffusion Large Language Models. Results compare baselines versus SR-SMC variants using block decoding with block size = 32 with KV cache and parallel decoding enabled.}
    \label{tab:benchmark_results}
    \resizebox{.99\linewidth}{!}{
    \begin{tabular}{lc| cc cc}
        \toprule
        \textbf{Benchmark} & \textbf{Length} & \textbf{LLaDA-1.5} & w/ SR-SMC & \textbf{Dream-7B} & \textit{w/ SR-SMC} \\
        \midrule
        GSM8K & 256 & 79.8 & \textbf{80.7} & 76.2 & \textbf{78.0} \\
        {\footnotesize (5-shot)} & 512 & 80.4 & \textbf{82.0} & 78.0 & \textbf{78.0} \\
        \midrule
        MATH & 256 & 38.2 & \textbf{41.8} & 41.6 & \textbf{45.2} \\
        {\footnotesize (4-shot)} & 512 & 41.4 & \textbf{45.4} & 44.6 & \textbf{47.6} \\
        \midrule
        HumanEval & 256 & 38.7 & \textbf{41.5} & 47.9 & \textbf{53.7} \\
        {\footnotesize (0-shot)} & 512 & 35.4 & \textbf{41.5} & 45.1 & \textbf{53.7} \\
        \midrule
        MBPP & 256 & 40.4 & \textbf{44.2} & 42.0 & \textbf{48.6} \\
        {\footnotesize (3-shot)} & 512 & 40.2 & \textbf{43.2} & 39.6 & \textbf{46.4} \\
        \midrule
        \multirow{2}{*}{\textbf{Average}} 
        & 256 & 49.3 & \textbf{52.1} & 51.9 & \textbf{56.4} \\
        & 512 & 49.4 & \textbf{53.0} & 51.8 & \textbf{56.4} \\
        \bottomrule
    \end{tabular}
    }
\end{table}
To further evaluate the effectiveness and generalizability of our SR-SMC, we conduct experiments on two representative diffusion large language models: LLaDA-1.5 \cite{zhu2025llada} and Dream-7B \cite{ye2025dream}. Following \citet{wu2025fast}, we evaluate these models across four challenging benchmarks: GSM8K and MATH for mathematical reasoning, and HumanEval and MBPP for code generation. The performance is measured using two different generation lengths ($L \in \{256, 512\}$) with a block size of 32.

The results are summarized in Table~\ref{tab:benchmark_results}. Overall, incorporating SR-SMC in sampling consistently improves the results across all benchmarks, model architectures, and generation lengths. Typically, our algorithm achieves average performance gains of \textbf{2.8+} and \textbf{4.5+} for LLaDA-1.5 and Dream-7B, respectively. This indicates a strong and consistent benefit of inference-time scaling through particle-based sampling. Moreover, the improvements on both mathematical and coding tasks suggest that SR-SMC generalizes well across different task domains. These gains are also sustained when generating much longer sequences, indicating that our trajectory-level resampling effectively mitigates error accumulation in diffusion-based generation. Collectively, these findings suggest that SR-SMC can serve as an effective and robust solution to computational scaling in dLLMs.

\section{Discussion and Analysis}

\begin{figure*}[ht]
    \centering
    \resizebox{0.99\linewidth}{!}{
    \begin{tcolorbox}[colback=gray!2, colframe=gray!50, arc=2pt, boxrule=0.5pt, title=\textbf{Qualitative Comparison: Mathematical Reasoning (MATH)}]
        \small
        \textbf{Problem:} In the land of Ink, the money system is unique. One Trinket is equal to 4 Blinkets, and 3 Blinkets are equal to 7 Drinkets. In Trinkets, what is the value of 56 Drinkets? \\
        \textbf{Reference Answer:} 6
        
        \vspace{0.2cm}
        \begin{minipage}[t]{0.48\linewidth}
            \begin{tcolorbox}[colback=red!5, colframe=red!30, title=\textbf{Greedy Decoding ($\tau=0$)}, height=6.cm]
                \scriptsize
                To find the value of 56 Drinkets in Trinkets, we need to convert all units to Trinkets. We know the following conversion rates: \\
                1. 1 Trinket = 4 Blinkets \\
                2. \textbf{3 Blinkets = 7 Blinkets} \textcolor{red}{(Local Consistency Error)} \\
                \\
                First, let's convert 3 Blinkets to Trinkets: \\
                $3 \text{ Blinkets} = \frac{3}{4} \text{ Trinkets}$. \\
                Next, let's convert 7 Blinkets to Trinkets: \\
                $7 \text{ Blinkets} = 7 \times \frac{3}{4} \text{ Trinkets} = 5.25 \text{ Trinkets}$. \\
                Now, let's convert 56 Drinkets to Trinkets: \\
                $56 \text{ Drinkets} = 56 \times 5.25 \text{ Trinkets} = 294 \text{ Trinkets}$. \\
                \\
                \textbf{Final Answer: \boxed{294}} \textcolor{red}{\ding{55}}
            \end{tcolorbox}
        \end{minipage}
        \hfill
        \begin{minipage}[t]{0.51\linewidth}
            \begin{tcolorbox}[colback=green!10, colframe=green!40!black!30, title=\textbf{w/ SR-SMC ($\tau=1, N=4$)}, height=6.cm]
                \scriptsize
                To find the value of 56 Drinkets in Trinkets, we need to convert Drinkets to Blinkets first and then from Blinkets to Trinkets. \\
                \\
                1. Convert Drinkets to Blinkets: \\
                - We know that 3 Blinkets are equal to 7 Drinkets. \\
                - Therefore, 1 Drinket is equal to $\frac{3}{7}$ Blinkets. \\
                - So, 56 Drinkets is equal to $56 \times \frac{3}{7} = 24$ Blinkets. \\
                \\
                2. Convert Blinkets to Trinkets: \\
                - We know that 1 Trinket is equal to 4 Blinkets. \\
                - Therefore, 1 Trinket is equal to $\frac{1}{4}$ Trinkets. \\
                - So, 24 Blinkets is equal to $24 \times \frac{1}{4} = 6$ Trinkets. \\
                \\
                Therefore, the value of 56 Drinkets in Trinkets is \boxed{6} Trinkets. \\
                \\
                \textbf{Final Answer: \boxed{6}} \textcolor{green!70!black}{\ding{51}}
            \end{tcolorbox}
        \end{minipage}
    \end{tcolorbox}
    }
    \caption{A qualitative comparison of reasoning trajectories. Greedy decoding focus on step-wise confidence \cite{sahoo2024simple} leads to a hallucinated identity ($3$ Blinkets $= 7$ Blinkets) that persists through the chain. SR-SMC utilizes trajectory-level confidence to explore multiple trajectories in parallel and successfully recovers the correct multi-step conversion.}
    \label{fig:case_study}
\end{figure*}

\paragraph{Scaling the Number of Particles}

As the number of particles N increases, we observe a clear and consistent improvement in performance across both LLaDA-1.5 and Dream-7B on all four benchmarks, as shown in Table~\ref{tab:particle_ablation_transpose} and Figure~\ref{fig:n_particles}. When $N=1$, the models reduce to standard parallel decoding, which serves as a lower bound on performance across all benchmarks. Scaling the particles to $N=2,3,4$ steadily improves performance (see Figure~\ref{fig:n_particles}), with the strongest gains typically achieved at $N=3$ or $N=4$. On average, scaling particles to~$N=4$ improves LLaDA-1.5 from $49.3$ to $52.1$ and Dream-7B from $51.9$ to $56.4$. These results suggest that increasing the number of particles effectively expands the search space over generation trajectories, allowing the model to recover from locally suboptimal token choices and accumulate higher trajectory-level confidence. Interestingly, even a modest increase to~$N=2$ yields significant gains over the baseline, highlighting SR-SMC as a practical and scalable inference-time method to masked diffusion language models.

\begin{table}[t]
    \centering
    \caption{Ablation results of scaling the number of particles $N$ for LLaDA-1.5 and Dream-7B across four benchmarks ($L=256$).}
    \label{tab:particle_ablation_transpose}
    \resizebox{.99\linewidth}{!}{
    \begin{tabular}{ll|cccc}
        \toprule
        \textbf{Model} & \textbf{Benchmark} & \textbf{$N=1$} & \textbf{$N=2$} & \textbf{$N=3$} & \textbf{$N=4$} \\
        \midrule
        \multirow{5}{*}{\textbf{LLaDA-1.5}} 
        & GSM8K {\footnotesize (5-shot)}    & 79.8 & \textbf{81.7} & 80.7 & 80.7 \\
        & MATH {\footnotesize (4-shot)}     & 38.2 & 41.8 & 41.8 & 41.8 \\
        & HumanEval {\footnotesize (0-shot)} & 38.7 & 39.0 & 39.0 & \textbf{41.5} \\
        & MBPP {\footnotesize (3-shot)}     & 40.4 & 44.2 & 44.2 & 44.2 \\
        \cmidrule{2-6}
        & \textbf{Average}                  & 49.3 & 51.7 & 51.4 & \textbf{52.1} \\
        \midrule
        \multirow{5}{*}{\textbf{Dream-7B}} 
        & GSM8K {\footnotesize (5-shot)}    & 76.2 & 76.3 & 77.9 & \textbf{78.0} \\
        & MATH {\footnotesize (4-shot)}     & 41.6 & 43.2 & \textbf{45.6} & 45.2 \\
        & HumanEval {\footnotesize (0-shot)} & 47.9 & 52.4 & \textbf{54.6} & 53.7 \\
        & MBPP {\footnotesize (3-shot)}     & 42.0 & 43.6 & 44.2 & \textbf{48.6} \\
        \cmidrule{2-6}
        & \textbf{Average}                  & 51.9 & 53.9 & 55.6 & \textbf{56.4} \\
        \bottomrule
    \end{tabular}
    }
\end{table}

\begin{figure*}[ht]
  \centering
  \includegraphics[width=1.\linewidth]{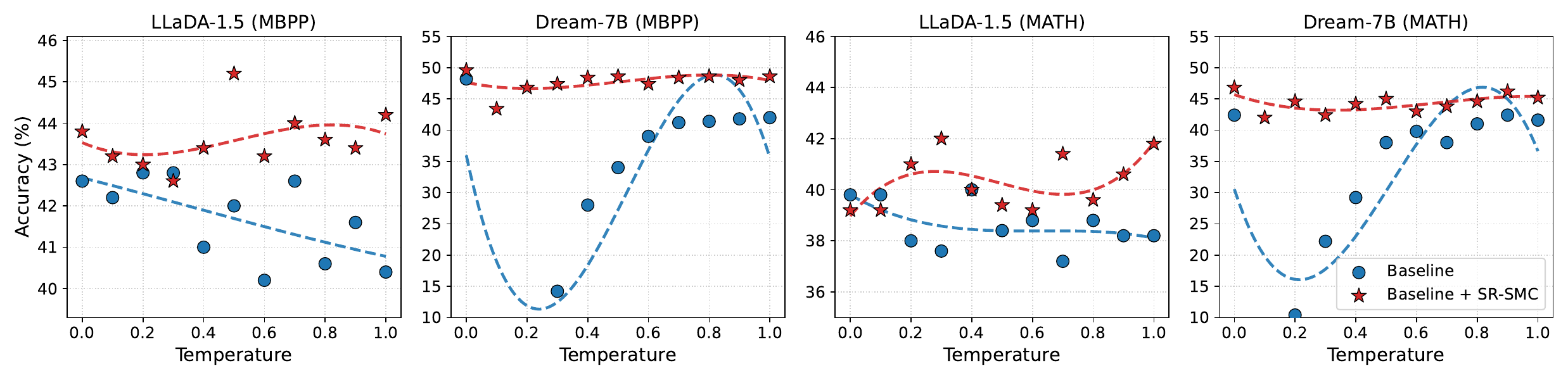}
  \caption{{Effect of sampling temperature $\tau$ on model performance across different benchmarks. We report the accuracy of LLaDA-1.5 and Dream-7B on MBPP and MATH datasets as the temperature varies uniformly from 0.0 to 1.0. The blue circles represent the baseline with standard parallel decoding, while the red stars denote the results using our SR-SMC with $N=4$ particles. SR-SMC consistently demonstrates better robustness across the entire temperature range. Notably, while the baseline performance of Dream-7B collapses at low temperatures (startig from 0.1) due to repetition, SR-SMC maintains stable and high accuracy by effectively exploring the generative space through particle re-weighting and resampling. }}
  \label{fig:ablation_temp}
\end{figure*}

\paragraph{Analysis of Particle Overtake}
To investigate the mechanical advantage of maintaining multiple interacting trajectories, we quantify the occurrence of \emph{overtake} events during the block-wise generation process. We define an overtake event as a case where the particle with the highest probability at the end of a block was not the dominant particle (i.e., highest probability) at the start of that same block.

As shown in Table \ref{tab:overtake_analysis}, we analyzed 640 blocks per benchmark using LLaDA-1.5. We observe that overtake events occur in approximately 24\% to 31\% of all blocks across different tasks. This empirical evidence indicates that the particles in SR-SMC are not merely performing idle exploration around greedy trajectories. Instead, the resampling mechanism preserves initially non-dominant but high-potential trajectories, allowing them to eventually outperform the local greedy choices that standard decoding methods would otherwise lock into. Notably, such behavior cannot be captured by single-trajectory decoding or purely greedy confidence-based remasking, which irreversible commit to the locally dominant particle at the beginning of each block.

\begin{table}[t]
    \centering
    \caption{Analysis of particle ``Overtake''. An overtake occurs when the particle with the highest probability at the end of a block was not the dominant particle at the start of that same block.}
    \label{tab:overtake_analysis}
    \resizebox{.75\linewidth}{!}{
    \begin{tabular}{lcc}
        \toprule
        \textbf{Benchmark}  & \textbf{\# Overtake} & \textbf{Percentage} \\
        \midrule
        GSM8K {\footnotesize (5-shot)}     & 160 / 640 & 25.0\% \\
        MBPP {\footnotesize (3-shot)}      & 154 / 640 & 24.1\% \\
        MATH {\footnotesize (4-shot)}     & 197 / 640 & 30.8\% \\
        HumanEval {\footnotesize (0-shot)}  & 196 / 640 & 30.6\% \\
        \midrule
        \textbf{Average}                    & -   & \textbf{27.6\%} \\
        \bottomrule
    \end{tabular}}
    \vspace{-0.1in}
\end{table}

\paragraph{Analysis of Gumbel Noise}

We investigate the impact of sampling stochasticity on the performance of masked diffusion language models. Traditional sampling strategies for MDLMs often rely on confidence-based greedy decoding, which can lead to myopic trajectory exploration and a lack of generation diversity. To examine how our proposed self-rewarding SMC interacts with different levels of randomness, we provide ablation experiments to evaluate the performance of LLaDA-1.5 and Dream-7B across a range of Gumbel noise temperatures $\tau \in [0, 1.0]$ during sampling.

As illustrated in Figure \ref{fig:ablation_temp}, the baseline performance using standard parallel decoding is highly sensitive to the sampling temperature. For the Dream-7B model, we observe a significant performance collapse at low temperatures (starting from $\tau=0.1$) on both MBPP and MATH benchmarks. This failure is primarily attributed to excessive token repetition during the deterministic decoding process, which traps the model in suboptimal generative trajectories. In contrast, SR-SMC with $N=4$ particles demonstrates superior robustness across the entire temperature range. By maintaining multiple interacting particles and performing resampling based on trajectory-level confidence, SR-SMC effectively explores the generative space and steers the sampling process away from repetitive or low-confidence regions. 

For LLaDA-1.5, although the baseline does not exhibit the same catastrophic collapse at low temperatures, SR-SMC still consistently outperforms the baseline across most temperature settings. The results show that SR-SMC provides the most significant gains at moderate to high temperatures, where it can better leverage the diverse candidates generated by the stochastic diffusion process. These findings highlight that SR-SMC is not only an effective inference-time scaling method but also a principled framework that enhances the stability and reliability of diffusion-based language generation regardless of the chosen sampling temperature. Figure~\ref{fig:case_study} further illustrates a qualitative comparison of decoding trajectories with different sampling strategies.

\paragraph{Zero-Shot Evaluation}

To further evaluate the generalizability of our method across different prompting configurations, we conduct a zero-shot evaluation on the GSM8K and MATH benchmarks. We note that the results for HumanEval presented in our main experiments (Table~\ref{tab:benchmark_results}) are inherently zero-shot. For the MBPP benchmark, however, we observed that both base models failed to produce code in the requisite format for automated post-processing without few-shot exemples. Unlike few-shot settings, where in-context demonstrations provide a template for the output, zero-shot generation relies exclusively on the model's intrinsic reasoning capabilities and the effectiveness of the inference-time exploration.

As illustrated in Table \ref{tab:zero_shot_results}, SR-SMC consistently enhances the performance of both LLaDA-1.5 and Dream-7B in these zero-shot settings. These results demonstrate that trajectory-level confidence serves as a robust implicit reward signal, enabling the model to effectively navigate a wider exploration space and steer away from low-quality outputs even in the absence of prompting demonstrations.

\begin{table}[t]
    \centering
    \caption{Zero-shot performance of LLaDA-1.5 and DREAM-7B variants on GSM8K and MATH benchmarks across different generation lengths.}
    \label{tab:zero_shot_results}
    \resizebox{.99\linewidth}{!}{
    \begin{tabular}{lc|cc|cc}
        \toprule
        \textbf{Benchmark} & \textbf{Length} & \textbf{LLaDA-1.5} & \textit{w/ SR-SMC} & \textbf{Dream-7B} & \textit{w/ SR-SMC} \\
        \midrule
        GSM8K & 256 & 75.4 & \textbf{76.3} & 77.9 & \textbf{81.1} \\
        {\footnotesize (0-shot)} & 512 & 80.4 & \textbf{82.0} & 78.0 & 78.0 \\
        \midrule
        MATH & 256 & 35.2 & \textbf{36.6} & 44.2 & \textbf{49.2} \\
        {\footnotesize (0-shot)} & 512 & 39.0 & \textbf{44.8} & 48.8 & \textbf{51.6} \\
        \bottomrule
    \end{tabular}}
    \vskip -0.1in
\end{table}

\section{Related Work}

As D3PM~\cite{austin2021structured} introduced the state absorbing with mask token for discrete diffusion models, masked diffusion language models have attracted increasing attention as a promising alternative to auto-regressive (AR) models~\cite{sahoo2024simple,lou2024discrete,schiff2024simple,shi2024simplified,arriola2025block}. Built upon MDLMs, diffusion large language models such as LLaDAs~\cite{nie2025large,zhu2025llada,bie2025llada2} Dream~\cite{ye2025dream}, and DiffuLLaMA~\cite{gong2024scaling} have demonstrated strong scalability and achieved competitive performance when compared to similarly sized AR models. Subsequently, dLLM-Cache~\cite{liu2025dllm} and Fast-dLLM~\cite{wu2025fast} introduced caching and parallel decoding to MDLMs, further improving inference efficiency and their potential for real-world applications.

Despite these advances, existing MDLMs primarily improve performance through model scaling~\cite{nie2024scaling,nie2025large}, architectural modifications~\cite{wu2025fastv2,bie2025llada2}, or training-time interventions~\cite{schiff2024simple,hersche2025soft}, while the role of inference-time scaling remains largely unexplored. As a primary work, \citet{ma2025inference} first proposed scaling test-time compute for diffusion models, substantially improving the performance beyond simply scaling diffusion sampling steps. \citet{singhal2025a} illustrated this idea on MDLMs using the sequential Monte Carlo framework. \citet{dang2025inference} further extend it with particle Gibbs sampling that enables generation refinement via an external task-specific reward guidance. In parallel, ReMDM~\cite{wang2025remasking} introduces a principled remasking strategy to improve text quality with more sampling steps. In contrast, our proposed algorithm is self-rewarding and can be used for general tasks with arbitrary pretrained models and remasking strategies.

\section{Conclusion}

In this paper, we propose a novel inference-time scaling algorithm for masked diffusion language models (MDLMs). Our algorithm is a sequential Monte Carlo (SMC) method where the trajectory confidence is used as importance weights. This results in a self-rewarding SMC framework that promotes globally confident generation trajectories, without requiring additional training or external reward guidance. Extensive experiments and ablation studies across multiple benchmarks and model families demonstrate that our self-rewarding SMC significantly improves pretrained MDLMs in terms of both sample quality and diversity, unlocking an effective and principled inference-time scaling dimension for diffusion-based language generation.

\section*{Acknowledgements}

This research was supported by \textit{Kjell \& M{\"a}rta Beijer Foundation} and by the project \textit{Deep Probabilistic Regression -- New Models and Learning Algorithms} (contract number: 2021-04301) funded by the Swedish Research Council. 
Some of the computations were enabled by resources provided by the National Academic Infrastructure for Supercomputing in Sweden (NAISS), partially funded by the Swedish Research Council through grant agreement no. 2022-06725.

\bibliography{main}
\bibliographystyle{icml2026}

\newpage

\appendix
\onecolumn

\section{Sequential Monte Carlo for Diffusion Reverse Process}

\subsection{Incremental Importance Weights}
\label{app-subsec:smc_iiweight}

Consider a sequence of tokens $\rvx_T,\dots,\rvx_0$ in the diffusion reverse path, with unnormalized intermediate target distribution $\tilde{\pi}_t(\rvx_{t:T})$. We define the following Markov sequential proposal 
\begin{equation}
    q_t(\rvx_{t:T})=q_T(\rvx_T) \prod_{k=t}^{T-1} q_k(\rvx_k \mid \rvx_{t+1}),
\end{equation}
so that it satisfies a standard factorization:
\begin{equation}
    q_{t-1}(\rvx_{t-1:T}) = q_t(\rvx_{t:T})\, q_{t-1}(\rvx_{t-1}\mid \rvx_t).
\end{equation}
The importance weight of trajectory $\rvx_{t:T}$ (from the full-mask tokens to a partial masked state) at time $t$ is given by
\begin{equation}
    \tilde W_t(\rvx_{t:T}) = \frac{\tilde\pi_t(\rvx_{t:T})}{q_t(\rvx_{t:T})}.
    \label{app-eq:iseight}
\end{equation}
Likewise, at next step, we could write
\begin{align}
    \tilde W_{t-1}(\rvx_{t-1:T}) 
    &= \frac{\tilde\pi_{t-1}(\rvx_{t-1:T})}{q_{t-1}(\rvx_{t-1:T})} \\
    &= \frac{\tilde\pi_{t-1}(\rvx_{t-1:T})}{q_t(\rvx_{t:T}) \, q_{t-1}(\rvx_{t-1} \mid \rvx_t)} \\
    &= \underbrace{\frac{\tilde\pi_{t-1}(\rvx_{t-1:T})}{\tilde\pi_t(\rvx_{t:T}) \, q_{t-1}(\rvx_{t-1} \mid \rvx_t)}}_{=\tilde{w}_{t-1}(\rvx_{t-1:T})} \cdot \underbrace{\frac{\tilde\pi_t(\rvx_{t:T})}{q_t(\rvx_{t:T})}}_{=\tilde W_t(\rvx_{t:T})},
    \label{app-eq:smc-iiweight}
\end{align}
where $\tilde W_t(\rvx_{t:T})$ is the previous weight as in Eq.~\eqref{app-eq:iseight}, and $\tilde{w}_{t-1}(\rvx_{t-1:T})$ is the incremental importance weights in Eq.~\eqref{eq:smc_iiweights}. Conceptually, $\tilde{w}_{t-1}(\rvx_{t-1:T})$ is the local ratio that updates the global path-wise importance weight when extending trajectory with more unmasked tokens i.e., from $\rvx_{t:T}$ to $\rvx_{t-1:T}$. In practice, SMC maintains particle weights recursively via Eq.~\eqref{app-eq:smc-iiweight} and performs resampling using normalized version of $\tilde{W}_{t-1}(\rvx_{t:T})$.

\subsection{Proof for Confidence-based Sequential Monte Carlo}
\label{app-subsec:proof_prop31}

\textbf{Proposition 3.1.}
\textit{
Given a pretrained diffusion model $p_\theta$, let $\{\tilde\pi_t(\rvx_{t:T})\}_{t=0}^T$ denote the unnormalized path measures defined by the recursion in Eq.~\eqref{eq:fk_recursion}. If the sequential proposal in SMC is chosen to be the diffusion transition kernel, i.e., $q_{t-1}(\rvx_{t-1} \mid \rvx_t) = K_t(\rvx_t,\rvx_{t-1})$,
then the incremental importance weights at step $t-1$ is given by
\begin{equation}
    \tilde{w}_{t-1}(\rvx_{t-1:T}) = \prod_{j\in S_t} \rvc_t(j),
\end{equation}
where $\rvc_t (j) \coloneqq p_\theta\big(\hat{\rvx}_0(j) \mid \rvx_t \big)$ is the token confidence and $\gS_t$ denotes the selected mask subset to be updated at step $t$.
}

\begin{proof}
Recall that in sequential Monte Carlo, the incremental importance weight at step $t-1$ is defined as
\begin{equation}
    \tilde{w}_{t-1}(\rvx_{t-1:T}) = \frac{\tilde{\pi}_{t-1}(\rvx_{t-1:T})}{\tilde{\pi}_t(\rvx_{t:T}) \, q_{t-1}(\rvx_{t-1} \mid \rvx_t)},
    \label{app-eq:smc_iiweights}
\end{equation}
Substituting the path recursion in Eq.~\eqref{eq:fk_recursion} into the numerator yields
\begin{equation}
    \tilde{w}_{t-1}(\rvx_{t-1:T}) = \frac{\tilde{\pi}_t(\rvx_{t:T})\, K_t(\rvx_t,\rvx_{t-1})\, G_{t-1}(\rvx_t,\rvx_{t-1})}{\tilde{\pi}_t(\rvx_{t:T})\, q_{t-1}(\rvx_{t-1}\mid \rvx_t)}.
\end{equation}
Removing \(\tilde{\pi}_t(\rvx_{t:T})\) in both numerator and denominator, we obtain
\begin{equation}
    \tilde{w}_{t-1}(\rvx_{t-1:T}) =\frac{K_t(\rvx_t,\rvx_{t-1})\, G_{t-1}(\rvx_t,\rvx_{t-1})}{q_{t-1}(\rvx_{t-1}\mid \rvx_t)}.
\end{equation}
When the proposal distribution $q_{t-1}(\rvx_{t-1}\mid \rvx_t)$ is selected to be the same as the diffusion transition kernel, i.e, $q_{t-1}(\rvx_{t-1}\mid \rvx_t)=K_t(\rvx_t,\rvx_{t-1})$, the transition kernel disappear, and
\begin{equation}
    \tilde{w}_{t-1}(\rvx_{t-1:T}) = G_{t-1}(\rvx_t,\rvx_{t-1}).
\end{equation}
And recall that we define the potential as the joint probability of accepted tokens within set $\gS_t$, i.e., $G_{t-1}(\rvx_t,\rvx_{t-1}) = \prod_{j\in S_t} p_\theta(\rvx_{t-1}(j) \mid \rvx_t)$, and that $\rvc_t (j) \coloneqq p_\theta\big(\hat{\rvx}_0(j) \mid \rvx_t \big)$, we finally have the following
\begin{align}
    \tilde{w}_{t-1}(\rvx_{t-1:T}) 
    &= \prod_{j\in S_t} p_\theta(\rvx_{t-1}(j) \mid \rvx_t) \\
    &= \prod_{j\in S_t} \rvc_t(j),
\end{align}
which completes the proof. 
\end{proof}

\section{Limitation and Future Work}
\label{app-sec:limitation}

While the proposed self-rewarding SMC provides a principled framework for trajectory-level confidence-guided sampling, there are several worth-noting limitations. First, our method increases inference-time computation by running multiple diffusion processes in parallel. This trade-off is inherent to inference-time scaling methods and can be controllable by adjusting the number of particles. Second, the proposed trajectory confidence relies solely on model likelihood. While this choice is generic and task-agnostic, it does not explicitly optimize for downstream objectives such as reasoning correctness or human preference. In our future work, we plan to explore more informed proposals, such as look-ahead or twisted diffusion transitions, to further improve sampling efficiency and quality.

\section{Additional Experiment}

\subsection{Entropy Results of Text Generation}
\label{app-subsec:entropy}
We also report the entropy values of text generation with our self-rewarding SMC in Table~\ref{app-tab:gen_ppl_entropy}. Note that the generative perplexity and entropy of the original data are 14.8 and 5.44, respectively, as reported in~\citet{wang2025remasking}, which demonstrate that our method improves sample quality while preserving text diversity.

\begin{table}[ht]
    \small
  \caption{Generative perplexity (Gen. PPL; $\downarrow$), entropy, and the number of function evaluations (NFEs; $\downarrow$) of 300 samples of lengths $L=1024, 2048$. All models are trained on OWT dataset. For reference, the Gen. PPL and entropy of the original data are 14.8 and 5.44, respectively, as reported by~\citet{wang2025remasking}.}
  \label{app-tab:gen_ppl_entropy}
  \centering
  \resizebox{.8\linewidth}{!}{
    \begin{tabular}{ll cccccc}
      \toprule
      & & \multicolumn{3}{c}{$L=1024$} & \multicolumn{3}{c}{$L=2048$} \\
      \cmidrule(lr){3-5} \cmidrule(lr){6-8}
      Model & & Gen. PPL($\downarrow$) & Entropy($\uparrow$) & NFEs & Gen. PPL($\downarrow$) & Entropy($\uparrow$) & NFEs \\
      \midrule 
      MDLM \textit{w/ SR-SMC} & & 25.8 & 5.15 & 4K & 25.9 & 5.41 & 8K \\
      \multirow{3}{*}{\makecell{BD3-LMs \textit{w/ SR-SMC}}}
      & \hspace{4.1em} $L'=16$ & 21.1 & 5.19 & 4K & 20.2 & 5.46 & 8K \\
      & \hspace{4.1em} $L'=8$ & 18.9 & 5.18 & 4K & 17.3 & 5.45 & 8K \\
      & \hspace{4.1em} $L'=4$ & \textbf{16.1} & \textbf{5.20} & 4K & \textbf{15.1} & \textbf{5.49} & 8K \\
      \bottomrule
    \end{tabular}}
\end{table}

\subsection{Detailed Results of Inference with Gumbel Noise}
\label{app-subsec:gumbel_noise}

We provide the detailed results of diffusion sampling with different Gumbel noise temperatures $\tau$ ranging from 0.0 to 1.0, as shown in Table~\ref{app-tab:gumbel_tau}. We observe that our self-reward SMC (SR-SMC) consistently outperforms the baseline models LLaDA-1.5~\cite{zhu2025llada} and Dream-7B~\cite{ye2025dream} over a wide range of noise temperatures across both MBPP and MATH benckmarks. Notably, Dream-7B baseline is highly sensitive to noise temperatures, as its performance degrades severely when slightly increase $\tau$ to 0.1 and 0.2. While our method significantly improves its robustness over all noise temperatures. This behavior underscores the brittleness of traditional diffusion sampling strategies , showing a great potential of our SR-SMC that mitigates this issue through particle-based exploration and resampling.

\subsection{Additional Examples}
\label{app-sec:add_examples}
We include more examples of comparison of the paths between greedy decoding and our self-rewarding SMC in Figure \ref{fig:arithmetic_case_study2} and \ref{fig:arithmetic_case_study3}. These qualitative results further illustrate how standard greedy decoding is prone to local consistency errors and calculation hallucinations, whereas SR-SMC maintains global coherence through its particle resampling mechanism.

\begin{table*}[ht]
\centering
\small
\caption{Effect of Gumbel noise temperature $\tau$ on MBPP and MATH.}
\label{app-tab:gumbel_tau}
\resizebox{1.\linewidth}{!}{
\begin{tabular}{ll | ccccccccccc}
\toprule
Benchmark & Method & $\tau=0.0$ & $\tau=0.1$ & $\tau=0.2$ & $\tau=0.3$ & $\tau=0.4$ & $\tau=0.5$ & $\tau=0.6$ & $\tau=0.7$ & $\tau=0.8$ & $\tau=0.9$ & $\tau=1.0$ \\
\midrule

\multirow{4}{*}{\textbf{MBPP}}
& LLaDA-1.5  
& 42.6 & 42.2 & 42.8 & 42.8 & 41.0 & 42.0 & 40.2 & 42.6 & 40.6 & 41.6 & 40.4 \\
& LLaDA-1.5 \textit{w/ SR-SMC} 
& \textbf{43.8} & \textbf{43.2} & \textbf{43.0} & 42.6 & \textbf{43.4} & \textbf{45.2} & \textbf{43.2} & \textbf{44.0} & \textbf{43.6} & \textbf{43.4} & \textbf{44.2} \\

& Dream-7B 
& 48.2 & 1.80 & 5.60 & 14.2 & 28.0 & 34.0 & 39.0 & 41.2 & 41.4 & 41.8 & 42.0 \\
& Dream-7B \textit{w/ SR-SMC} 
& \textbf{49.6} & \textbf{43.4} & \textbf{46.8} & \textbf{47.4} & \textbf{48.4} & \textbf{48.6} & \textbf{47.4} & \textbf{48.4} & \textbf{48.6} & \textbf{48.0} & \textbf{48.6} \\

\midrule
\multirow{4}{*}{\textbf{MATH}}
& LLaDA-1.5  
& 39.8 & 39.8 & 38.0 & 37.6 & 40.0 & 38.4 & 38.8 & 37.2 & 38.8 & 38.2 & 38.2 \\
& LLaDA-1.5 \textit{w/ SR-SMC} 
& 39.2 & 39.2 & \textbf{41.0} & \textbf{42.0} & 40.0 & \textbf{39.4} & \textbf{39.2} & \textbf{41.4} & \textbf{39.6} & \textbf{40.6} & \textbf{41.8} \\

& Dream-7B  
& 42.4 & 2.40 & 10.4 & 22.2 & 29.2 & 38.0 & 39.8 & 38.0 & 41.0 & 42.4 & 41.6 \\
& Dream-7B \textit{w/ SR-SMC} 
& \textbf{46.8} & \textbf{42.0} & \textbf{44.6} & \textbf{42.4} & \textbf{44.2} & \textbf{45.0} & \textbf{43.0} & \textbf{43.8} & \textbf{44.6} & \textbf{46.2} & \textbf{45.2} \\

\bottomrule
\end{tabular}}
\end{table*}

\begin{figure*}[ht]
    \centering
    \begin{tcolorbox}[colback=gray!2, colframe=gray!50, arc=2pt, boxrule=0.5pt, title=\textbf{Qualitative Comparison: More Examples on Arithmetic Reasoning}]
        \small
        \textbf{Problem:} Simplify $\frac{1}{5} \cdot \frac{8}{7} \div \frac{12}{20}$. \\
        \textbf{Reference Answer:} $\frac{8}{21}$
        
        \vspace{0.2cm}
        \begin{minipage}[t]{0.48\linewidth}
            \begin{tcolorbox}[colback=red!5, colframe=red!30, title=\textbf{Greedy Decoding ($\tau=0$)}, height=9cm]
                \scriptsize
                First, simplify the multiplication of the fractions: \\
                \[ \frac{1}{5} \cdot \frac{8}{7} = \frac{8}{7} \] \textcolor{red}{(Calculation Hallucination: ignored $\frac{1}{5}$)} \\
                \\
                Next, simplify the division of the fractions. Recall that dividing by a fraction is equivalent to multiplying by its reciprocal: \\
                \[ \frac{8}{7} \div \frac{12}{20} = \frac{8}{7} \cdot \frac{20}{12} \] \\
                Now, multiply the numerators and the denominators: \\
                \[ \frac{8 \cdot 20}{7 \cdot 12} = \frac{160}{84} \] \\
                Simplify the fraction by finding the greatest common divisor (GCD) of 160 and 84. The GCD of 160 and 84 is 40: \\
                \[ \frac{160 \div 40}{84 \div 40} = \frac{4}{7} \] \\
                \\
                \textbf{Final Answer: \boxed{\frac{4}{7}}} \textcolor{red}{\ding{55}}
            \end{tcolorbox}
        \end{minipage}
        \hfill
        \begin{minipage}[t]{0.48\linewidth}
            \begin{tcolorbox}[colback=green!10, colframe=green!40!black!30, title=\textbf{w/ SR-SMC ($\tau=1, N=4$)}, height=9cm]
                \scriptsize
                To simplify the expression $\frac{1}{5} \cdot \frac{8}{7} \div \frac{12}{20}$, we can follow these steps: \\
                \\
                1. \textbf{Multiply the fractions first:} \\
                \[ \frac{1}{5} \cdot \frac{8}{7} = \frac{1 \cdot 8}{5 \cdot 7} = \frac{8}{35} \] \\
                \\
                2. \textbf{Divide the fractions:} \\
                \[ \frac{8}{35} \div \frac{12}{20} = \frac{8}{35} \cdot \frac{20}{12} = \frac{8 \cdot 20}{35 \cdot 12} = \frac{160}{420} \] \\
                \\
                3. \textbf{Simplify the fraction:} \\
                \[ \frac{160}{420} = \frac{8}{21} \] \\
                \\
                Therefore, the simplified form of the expression is: \\
                \textbf{Final Answer: \boxed{\frac{8}{21}}} \textcolor{green!70!black}{\ding{51}}
            \end{tcolorbox}
        \end{minipage}
    \end{tcolorbox}
    \caption{More qualitative comparison of arithmetic reasoning.}
    \label{fig:arithmetic_case_study2}
\end{figure*}

\begin{figure*}[ht]
    \centering
    \begin{tcolorbox}[colback=gray!2, colframe=gray!50, arc=2pt, boxrule=0.5pt, title=\textbf{Qualitative Comparison: Distance Calculation}]
        \small
        \textbf{Problem:} At 50 miles per hour, how far would a car travel in $2\frac{3}{4}$ hours? Express your answer as a mixed number. \\
        \textbf{Reference Answer:} $137\frac{1}{2}$
        
        \vspace{0.2cm}
        \begin{minipage}[t]{0.48\linewidth}
            \begin{tcolorbox}[colback=red!5, colframe=red!30, title=\textbf{Greedy Decoding ($\tau=0$)}, height=9cm]
                \scriptsize
                To determine how far a car would travel in $2\frac{3}{4}$ hours at a speed of 50 miles per hour, we can use the formula for distance: \\
                \[ \text{Distance} = \text{Speed} \times \text{Time} \] \\
                First, convert the mixed number $2\frac{3}{4}$ to an improper fraction: \\
                \[ 2\frac{3}{4} = \frac{1}{4} \] \textcolor{red}{(Severe Conversion Error)} \\
                \\
                Next, multiply the speed by the time: \\
                \[ \text{Distance} = 50 \times \frac{1}{4} = 12.5 \text{ miles} \] \\
                \\
                Finally, convert 12.5 miles to a mixed number: \\
                \[ 12.5 = 12\frac{1}{2} \] \\
                \\
                \textbf{Final Answer: \boxed{12\frac{1}{2}}} \textcolor{red}{\ding{55}}
            \end{tcolorbox}
        \end{minipage}
        \hfill
        \begin{minipage}[t]{0.48\linewidth}
            \begin{tcolorbox}[colback=green!10, colframe=green!40!black!30, title=\textbf{w/ SR-SMC ($\tau=1, N=4$)}, height=9cm]
                \scriptsize
                To determine how far a car would travel in $2\frac{3}{4}$ hours at a speed of 50 miles per hour, we use the formula: \\
                \[ \text{Distance} = \text{Speed} \times \text{Time} \] \\
                First, convert the time from a mixed number to an improper fraction: \\
                \[ 2\frac{3}{4} = \frac{8}{4} + \frac{3}{4} = \frac{11}{4} \] \\
                \\
                Now, multiply the speed by the time: \\
                \[ \text{Distance} = 50 \times \frac{11}{4} = \frac{550}{4} = 137.5 \] \\
                \\
                Convert 137.5 to a mixed number: \\
                \[ 137.5 = 137\frac{1}{2} \] \\
                \\
                \textbf{Final Answer: \boxed{137\frac{1}{2}}} \textcolor{green!70!black}{\ding{51}}
            \end{tcolorbox}
        \end{minipage}
    \end{tcolorbox}
    \caption{More qualitative comparison of physical reasoning.}
    \label{fig:arithmetic_case_study3}
\end{figure*}

\end{document}